% File acl2019.tex
%
%% Based on the style files for ACL 2018, NAACL 2018/19, which were
%% Based on the style files for ACL-2015, with some improvements
%%  taken from the NAACL-2016 style
%% Based on the style files for ACL-2014, which were, in turn,
%% based on ACL-2013, ACL-2012, ACL-2011, ACL-2010, ACL-IJCNLP-2009,
%% EACL-2009, IJCNLP-2008...
%% Based on the style files for EACL 2006 by 
%%e.agirre@ehu.es or Sergi.Balari@uab.es
%% and that of ACL 08 by Joakim Nivre and Noah Smith

\documentclass[11pt,a4paper]{article}
\usepackage[hyperref]{acl2019}
\usepackage{times}
\usepackage{latexsym}

\usepackage{url}

\usepackage{latexsym}
\usepackage{booktabs}
\usepackage{soul,color}
\usepackage{todonotes} 
\usepackage[normalem]{ulem}
\usepackage{tikz}

\usepackage{amsfonts,amsmath,amsthm}
\theoremstyle{plain}
\newtheorem{theorem}{Theorem}[section]

\newtheorem{proposition}[theorem]{Proposition}

\usepackage{mathtools}
\DeclarePairedDelimiter{\ceil}{\lceil}{\rceil}

\newcommand{\weight}[1] {\mathbf{W}_{#1}}
\newcommand{\bias}[1] {\mathbf{b}_{#1}}
\newcommand{\hidden}[1] {h_{#1}}

\newcommand{\eqpunc}[1]{{\makebox[0pt][l]{\qquad\rm{#1}}}}

\aclfinalcopy % Uncomment this line for the final submission
%\def\aclpaperid{***} %  Enter the acl Paper ID here

%\setlength\titlebox{5cm}
% You can expand the titlebox if you need extra space
% to show all the authors. Please do not make the titlebox
% smaller than 5cm (the original size); we will check this
% in the camera-ready version and ask you to change it back.

%% Spacing hacks!
\setlength{\belowcaptionskip}{-0.8em}

\title{LSTM Networks Can Perform Dynamic Counting}

\author{Mirac Suzgun\textsuperscript{1} \hfill Sebastian Gehrmann\textsuperscript{1} \hfill Yonatan Belinkov\textsuperscript{12} \hfill Stuart M. Shieber\textsuperscript{1} \\\\ 
  \textsuperscript{1} Harvard John A. Paulson School of Engineering and Applied Sciences \\  
  \textsuperscript{2} MIT Computer Science and Artificial Intelligence Laboratory \\ 
  Cambridge, MA, USA \\ 
  \texttt{\{msuzgun@college,\{gehrmann,belinkov,shieber\}@seas\}.harvard.edu} }

\date{}

\begin{document}
\maketitle
\begin{abstract}
In this paper, we systematically assess the ability of standard recurrent networks to perform dynamic counting and to encode hierarchical representations. All the neural models in our experiments are designed to be small-sized networks both to prevent them from memorizing the training sets and to visualize and interpret their behaviour at test time. Our results demonstrate that the Long Short-Term Memory (LSTM) networks can learn to recognize the well-balanced parenthesis language (Dyck-$1$) and the shuffles of multiple Dyck-$1$ languages, each defined over different parenthesis-pairs, by emulating simple real-time $k$-counter machines. To the best of our knowledge, this work is the first study to introduce the shuffle languages to analyze the computational power of neural networks. We also show that a single-layer LSTM with only one hidden unit is practically sufficient for recognizing the Dyck-$1$ language. However, none of our recurrent networks was able to yield a good performance on the Dyck-$2$ language learning task, which requires a model to have a stack-like mechanism for recognition.
\end{abstract}

\section{Introduction}
Recurrent Neural Networks (RNNs) are known to capture long-distance and complex dependencies within sequential data. In recent years, RNN-based architectures have emerged as a powerful and effective architecture choice for language modeling \citep{mikolov2010recurrent}. When equipped with infinite precision and rational state weights, RNN models are known to be theoretically Turing-complete \citep{siegelmann1995computational}. However, there still remain some fundamental questions regarding the practical computational expressivity of RNNs with finite precision. 

\citet{weiss2018practical} have recently demonstrated that Long Short-Term Memory (LSTM) models \citep{hochreiter1997long}, a popular variant of RNNs, can, theoretically, emulate a simple real-time $k$-counter machine, which can be described as a finite state controller with $k$ separate counters, each containing integer values and capable of manipulating their content by adding $\pm 1$ or $0$ at each time step \citep{fischer1968counter}. The authors further tested their theoretical result by training the LSTM networks to learn $a^n b^n$ and $a^n b^n c^n$. Their examination of the cell state dynamics of the models exhibited the existence of simple counting mechanisms in the cell states. Nonetheless, these two formal languages can be captured by a particularly simple form of automaton, a deterministic one-turn two-counter automaton \citep{ginsburg1966finite}. Hence, there is still an open question of whether the LSTMs can empirically learn to emulate more general finite-state automata equipped with multiple counters capable of performing an arbitrary number of turns. 

In the present paper, we answer this question in the affirmative. We assess the empirical performance of three types of recurrent networks---Elman-RNNs (or RNNs, in short), LSTMs, and Gated Recurrent Units (GRUs)---to perform \textit{dynamic counting} by training them to learn the Dyck-$1$ language. Our results demonstrate that the LSTMs with only a single hidden unit perform with perfect accuracy on the Dyck-$1$ learning task, and successfully generalize far beyond the training set. Furthermore, we show that the LSTMs can learn the 
shuffles of multiple Dyck-$1$ languages, defined over disjoint parenthesis-pairs, which require the emulation of multiple-counter arbitrary-turn machines. Our results corroborate the theoretical findings of \citet{weiss2018practical}, while extending their empirical observations. On the other hand, when trained to learn the Dyck-$2$ language, which is a strictly context-free language, all our recurrent models failed to learn the language. 

\section{Preliminaries}
\label{sec:preliminaries}
We start by defining several subclasses of deterministic pushdown automata (DPA).  
Following \citet{valiant1975deterministic}, we define a deterministic one-counter automaton (DCA$_1$) to be a DPA with a stack alphabet consisting of only one symbol. Traditionally, this construction allows $\epsilon$-moves (that is, executing actions on the stack without the observance of any inputs), but we restrict our attention to simple DCA$_1$s without $\epsilon$-moves in the rest of this paper. Similarly, we call a DPA that contains $k$ separate stacks, with each stack using only one stack symbol, a deterministic $k$-counter automaton (DCA$_{k}$).\footnote{\citet{weiss2018practical} call such a construction a $k$-counter machine. Previous papers provide a detailed investigation of the complexity of counting machines \citep{minsky1967computation, fischer1968counter, valiant1975deterministic,valiant1975regularity}.}

One can impose a further restriction on the direction of stack movement of a DPA. This notion leads to the definition of a deterministic $n$-turn pushdown automaton (or $n$-turn DPA, in short) and is well-studied by \citet{ginsburg1966finite} and \citet{valiant1974equivalence}: A DPA is said to be an $n$-turn DPA if the total number of direction changes in the stack movement of the DPA is at most $n$ for each stack. Note that a one-turn DCA$_1$ can recognize $a^n b^n$ \citep{valiant1973decision}, whereas a one-turn DCA$_2$ can recognize $a^n b^n c^n$.  We say that a DCA$_k$ with no limit on the number of turns can perform \textit{dynamic counting}. 

\section{Related Work}
Formal languages have long been used to demonstrate the computational power of neural networks. Early studies \citep{steijvers1996recurrent, tonkes1997learning, rodriguez1998recurrent, boden1999learning, boden2000context, rodriguez2001simple} employed Elman-style RNNs \citep{elman1990finding} to recognize simple context-free and context-sensitive languages, such as $a^n b^n$, $a^n b^n c^n$, and $a^n b^n c b^m a^m$. Most of these architectures, however, suffered from the vanishing gradient problem \citep{hochreiter1998vanishing} and could not generalize far beyond their training sets. 

Using LSTMs \citep{hochreiter1997long}, \citet{gers2001lstm} showed that their models could learn two strictly context-free languages and one strictly context-sensitive language by effectively using their gating mechanisms. In contrast, \citet{das1992learning} proposed an RNN model with an external stack memory, named Recurrent Neural Network Pushdown Automaton (NNPDA), to learn basic  context-free grammars. 

More recently, \citet{joulin2015inferring}
introduced simple RNN models equipped with differentiable stack modules, called Stack-RNN, to infer algorithmic patterns, and showed that their model could successfully learn various formal languages, in particular $a^n b^n$, $a^n b^n c^n$, $a^n b^n c^n d^n$, $a^n b^{2n}$. Inspired by the early model design of NNPDAs, \citet{grefenstette2015learning} also proposed memory-augmented recurrent networks (Neural Stacks, Queues, and DeQues), which are RNNs equipped with unbounded differentiable memory modules, to perform sequence-to-sequence transduction tasks that require specific data structures. 

\citet{deleu2016learning} investigated the ability of Neural Turing Machines (NTMs; \citet{graves2014neural}) to capture long-distance dependencies in the Dyck-$1$ language. Their empirical findings demonstrated that an NTM can recognize this language by emulating a DPA. 
Similarly, \citet{sennhauser2018evaluating}, \citet{bernardy2018can}, and \citet{hao2018context} conducted experiments on the Dyck languages to explore whether recurrent networks can learn nested structures. These studies assessed the performance of their recurrent models to predict the next possible parenthesis, assuming that it is a closing parenthesis.\footnote{Their approach is slightly different than ours in the sense that we always try to predict the set of all the possible opening and closing parentheses at each time step.} In fact, \citet{bernardy2018can} used a purpose-designed architecture, called RUSS, which contains recurrent units with stack-like states, to perform the closing-parenthesis-completion task. Though the RUSS model had no trouble generalizing to longer and deeper sequences, as the author mentions, the specificity of the architecture disqualifies it as a practical model choice for natural language modeling tasks. Additionally, \citet{skachkova2018closing} trained recurrent networks to predict the last appropriate closing parenthesis, given a Dyck-$2$ sequence without its last symbol. They showed that their GRU and LSTM models performed with almost full accuracy on this parenthesis-completion task, but their task does not illustrate that these RNN models can recognize the Dyck language.

Most recently, \citet{weiss2018practical} and \citet{suzgun2019evaluating} showed that the LSTM networks can develop natural counting mechanisms to recognize simple context-free and context-sensitive languages, particularly $a^n b^n$, $a^n b^n c^n$, $a^n b^n c^n d^n$. Their examination of the cell states of the LSTMs revealed that the models learned to emulate simple one- and two-turn counters to recognize these formal languages, but the authors did not conduct any experiments on tasks that require counters to perform arbitrary number of turns.

\section{Models}
All the models in this paper are recurrent neural architectures and known to capture long-distance relationships in sequential data. We would like to compare and contrast their ability to perform dynamic counting to recognize simple counting languages. We further investigate whether they can learn the Dyck-$2$ language by emulating a DPA.

A simple RNN architecture
\citep{elman1990finding} is a recurrent model that takes an input $x_t$ and a previous hidden state representation $h_{t-1}$ to produce the next hidden state representation $h_t$, that is:
\begin{align*} 
h_t &= f (\weight{ih} x_t + \bias{ih} + \weight{hh} \hidden{t-1} + \bias{hh}) \\
y_t &= \sigma (\weight{y} \hidden{t})
\end{align*}
where $x_t \in \mathbb{R}^D$ is the input, $\hidden{t} \in \mathbb{R}^H$ the hidden state, $y_t \in \mathbb{R}^D$ the output at time $t$, $\weight{y} \in \mathbb{R}^{D \times H}$ the linear output layer, $f$ an activation function\footnote{In our experiments, we used the \text{tanh} function.} and $\sigma$ an elementwise logistic sigmoid function. 

In theory, it is known that RNNs with infinite precision and rational state weights are computationally universal models \citep{siegelmann1995computational}. However, in practice, the exact computational power of RNNs with finite precision is still unknown. Empirically, RNNs suffer from the vanishing or exploding gradient problem, as the length of the input sequences grow \citep{hochreiter1998vanishing}. To address this issue, different neural architectures have been proposed over the years. Here, we focus on two popular RNN variants with similar gating mechanism, namely LSTMs and GRUs.

The LSTM model was introduced by \citet{hochreiter1997long} to capture  long-distance dependencies more accurately than simple RNNs. It contains additional gating components to facilitate the flow of gradients during back-propagation. 

The GRU model was proposed by \citet{cho2014learning} as an alternative to LSTM. GRUs are similar to LSTMs in their design, but do not contain an additional memory unit. 

\section{Experimental Setup}
\vspace{-0.6em} 
To evaluate the capability of RNN-based architectures to perform dynamic counting and to encode hierarchical representations, we conducted experiments on four different synthetic sequence prediction tasks. Each task was designed to highlight some particular feature of recurrent networks. All the tasks were formulated as supervised learning problems with discrete $k$-hot targets and mean-squared-error loss under the sequence prediction framework, defined next. We repeated each experiment ten times but used the same random seed across each run for each of the tasks to ensure comparability of RNN, GRU, and LSTM models.

\vspace{-0.2em} 
\subsection{The Sequence Prediction Task}
\vspace{-0.2em} 
Following \citet{gers2001lstm}, we trained the models as follows: Given a sequence in the language, we presented one character at each time step to the network and trained the network to predict the set of next possible characters in the language, based on the current input character and the prior hidden state. We used a one-hot representation to encode the inputs and a $k$-hot representation to encode the outputs.
In all the experiments, the objective was to minimize the mean-squared error of the sequence predictions. We used an output threshold criterion of $0.5$ for the sigmoid layer to indicate which characters were predicted by the model.
Finally, we turned this sequence prediction task into a sequence classification task by \textit{accepting} a sequence if the model predicted \emph{all} of its output values correctly and \textit{rejecting} it otherwise. 

\vspace{-0.3em}
\subsection{Training Details}
\vspace{-0.2em}

Given the nature of the four languages that we will describe shortly, if recurrent models can learn them, then they should be able to do so with reasonably few hidden units and without the employment of any embedding layer or the dropout operation.\footnote{Some previous studies used embeddings \citep{sennhauser2018evaluating, bernardy2018can} and the dropout operation \citep{bernardy2018can} in their experiments.} To that end, all the recurrent models used in our experiments were single-layer networks containing less than $10$ hidden units. The number of hidden units that the networks contained for the Dyck-$1$, Dyck-$2$, Shuffle-$2$, and Shuffle-$6$ experiments were $3$, $4$, $4$, and $8$, respectively. (In Section~\ref{sec:discussion}, we describe further experiments with as few as a single hidden unit.) In all our experiments, we used the Adam optimizer \citep{kingma2014adam} with hyperparameter $\alpha = 0.001$.

\begin{table*}[t]
\centering
\begin{tabular}{l | cccccccc | cccccccccccccc}
\toprule
 \bf Sample & \multicolumn{8}{c|}{$( \ ( \ ) \ ) \ ( \ ) \ ( \ )$} & \multicolumn{14}{c}{$( \  [ \ ( \ ) \ ] \  ) \ [ \ ( \ [ \ ]  \ ) \ ] \ [ \ ]$} \\ 
 \midrule
 \bf  Input & $($&$($&$)$&$)$&$($&$)$ &$($&$)$ & $($ & $[$ & $($ & $)$ & $]$ & $)$ & $[$ & $($ & $[$ & $]$ & $)$ & $]$ & $[$ & $]$ \\ 
 \bf Output &  $1$ & $1$ & $1$ & $0$ & $1$ & $0$  & $1$ & $0$ & 
 $1$ & $2$ & $1$ & $2$ & $1$ & $0$ & $2$ & $1$ & $2$ & $1$ & $2$ & $0$ & $2$ & $0$\\
\bottomrule
\end{tabular}
\caption{Example input-output pairs for the Dyck-1 (left) and Dyck-2 (right) languages.}
\label{tab:examples_dyck}
\end{table*}

\begin{table*}[!ht]
\centering
\begin{tabular}{l | cccccccccccc | cccccccccc}
\toprule
 \bf Sample & \multicolumn{12}{c|}{$( \  [ \ ( \ ) \ ) \  ] \ [ \ ( \ [ \ ]  \ ] \ )$} & \multicolumn{10}{c}{$[ \ \{ \ ( \ ) \ \} \ ] \ \langle \ \lceil \ \rceil \ \rangle $}
 \\ \midrule
 \bf  Input & $($ & $[$ & $($ & $)$ & $)$ & $]$ & $[$ & $($ & $[$ & $]$ & $]$ & $)$ 
 & $[$ & $\{$ & $($ & $)$ & $\}$ & $]$ & $\langle$ & $\lceil$ & $\rceil$ & $\rangle$\\ 
 \bf Output & $1$ & $3$ & $3$ & $3$ & $2$ & $0$ & $2$ & $3$ & $3$ & $3$ & $1$ & $0$ 
 & $2$ & $6$ & $7$ & $6$ & $2$ & $0$ & $8$ & $24$ & $8$ & $0$\\
\bottomrule
\end{tabular}
\caption{Example input-output pairs for the Shuffle-2 (left) and Shuffle-6 (right) languages.}
\label{tab:examples_shuffle}
\end{table*}

\section{Experiments}
The first language, Dyck-$1$ (or $D_1$), consists of well-balanced sequences of opening and closing parentheses. Recall that a neural network need not be equipped with a stack-like mechanism to recognize the Dyck-$1$ language under the sequence prediction paradigm; a single counter DCA$_1$ is sufficient. However, dynamic counting is required to capture the language. 

The next two languages are the shuffles of two and six Dyck-$1$ languages, each defined over disjoint parentheses; we refer to these two languages as \textit{Shuffle}-$2$ and \textit{Shuffle}-$6$, respectively. These two tasks are formulated to investigate  whether recurrent networks can emulate deterministic $k$-counter automata by performing dynamic counting, separately counting the number of opening and closing parentheses for each of the distinct parenthesis-pairs and predicting the closing parentheses for the pairs for which the counters are non-zero, in addition to the opening parentheses.  In contrast, the final language, Dyck-$2$, is a context-free language which cannot be captured by a simple counting mechanism; a model capable of recognizing the \mbox{Dyck-$2$} language must contain a stack-like component \citep{sennhauser2018evaluating}. 

Tables~\ref{tab:examples_dyck} and~\ref{tab:examples_shuffle} provide example input-output pairs for the four languages under the sequence-prediction task. For purposes of presentation only, we use a simple binary encoding of the output sets to concisely represent the output. In all of the languages we investigate in this paper, the open parentheses are always allowable as next symbol; we assign the set of open parentheses the number $0$. Each closing parenthesis is assigned a different power of 2: $)$ is assigned to $1$, $]$ to $2$, $\}$ to $4$, $\rangle$ to $8$, $\rceil$ to $16$, and $\rfloor$ to $32$. (These latter closing parentheses are needed for the Shuffle-6 language below.) The set of predicted symbols is then the sum of the associated numbers. For instance, an output 3 represents the prediction of any of the open parentheses as well as $)$ and $]$.

We note that even though an input sequence might appear in two different languages, it might have different target representations. This observation is important especially when making comparisons between the Dyck-$2$ and the Shuffle-$2$ languages. For instance, the output sequence for $( \ [ \ ] \ )$ in the Dyck-$2$ language is $1\, 2\, 1\, 0$, whereas the output sequence for $( \ [ \ ] \ )$ in the Shuffle-$2$ language is $1\, 3\, 1\, 0$.

\vspace{-0.3em}
\subsection{The Dyck-1 Language}
\vspace{-0.2em}
The Dyck-$1$ language, or the well-balanced parenthesis language, arises naturally in enumerative combinatorics, statistics, and formal language theory. A sequence in the Dyck-$1$ language needs to contain an equal number of opening and closing parentheses, with the constraint that at each time step the total number of opening parentheses must be greater than or equal to the total number of closing parentheses so far. In other words, for a given sequence $s = s_1 \cdots s_{2n}$ of length $2n$ in the Dyck-$1$ language over the alphabet $\Sigma = \{(, )\}$, if we have a function $f$ that assigns value $+1$ to `$($' and value $-1$ to `$)$', then we know that it is always true that $\sum_{i = 1}^{j} f(s_i) \geq 0$ with strict equality when $j = 2n$, for all $j \in [1,\ldots, 2n]$. Therefore, a model with a single unbounded counter can recognize this language. 

\vspace{-.2em}
A probabilistic context-free grammar for the Dyck-$1$ language can be written as follows:
\begin{align*}
S \rightarrow \begin{cases} 
( S ) & \text{with probability } p \\
SS & \text{with probability } q \\ 
\varepsilon & \text{with probability } 1 - (p+q) 
\end{cases}
\end{align*}
\noindent where $0 < p, q < 1$ and $(p+q) < 1$.

\begin{table*}[!ht]
\centering
\begin{tabular}{c | c | ccc | ccc | ccc}
\toprule
& & \multicolumn{3}{c|}{\bf{Training Set}} & \multicolumn{3}{c|}{\bf{Short Test Set}} & \multicolumn{3}{c}{\bf{Long Test Set}} \\
\bf{Task} & \bf{Model} & \bf{Min} & \bf{Max} & \bf{Med} & \bf{Min} & \bf{Max} & \bf{Med} & \bf{Min} & \bf{Max} & \bf{Med} \\ \midrule
Dyck-$1$ & RNN & 0.45 & 76.17 & 46.96 & 0.28 & 73.62 & 41.89 & 0.06 & 24.44 & 7.19 \\
Dyck-$1$ & GRU & 99.37 & 100 & 100 & 99.34 & 100 & 100 & 67.68 & 95.58 & 84.38 \\
\bf{Dyck-1} & \bf{LSTM} & \bf{100} & \bf{100} & \bf{100} & \bf{100} & \bf{100} & \bf{100} & \bf{99.98} & \bf{100} & \bf{100} \\ 
\midrule
Shuffle-$2$ & RNN & 33.68 & 87.77 & 58.16 & 29.42 & 86.48 & 55.37 & 0.54 & 25.58 & 2.75 \\
Shuffle-$2$ & GRU & 99.73 & 100 & 99.97 & 99.62 & 99.98 & 99.93 & 83.70 & 95.18 & 93.12 \\
\bf{Shuffle-2} & \bf{LSTM} & \bf{100} & \bf{100} & \bf{100} & \bf{100} & \bf{100} & \bf{100} & \bf{96.84} & \bf{99.98} & \bf{99.35} \\ \midrule
Shuffle-$6$ & RNN & 0.26 & 57.39 & 44.54 & 0.16 & 54.80 & 41.38 & 0 & 0.64 & 0.15 \\
Shuffle-$6$ & GRU & 96.32 & 99.98 & 99.78 & 96.08 & 99.98 & 99.81 & 51.96 & 97.30 & 85.14 \\
\bf{Shuffle-6} & \bf{LSTM} & \bf{99.68} & \bf{100} & \bf{100} & \bf{99.74} & \bf{100} & \bf{100} & \bf{82.92} & \bf{99.72} & \bf{98.14} \\
\midrule
Dyck-$2$ & RNN & 4.37 & 31.67 & 14.74 & 2.96 & 27.46 & 12.21 & 0 & 0.46 & 0.01 \\
Dyck-$2$ & GRU & 7.78 & 53.34 & 28.71 & 5.38 & 49.06 & 25.08 & 0 & 1.56 & 0.05 \\
\bf{Dyck-2} & \bf{LSTM} & \bf{19.76} & \bf{52.13} & \bf{35.82} & \bf{16.58} & \bf{48.24} & \bf{31.29} & \bf{0} & \bf{1.46} & \bf{0.20} \\
\bottomrule
\end{tabular}
\caption{The performances of the RNN, GRU, and LSTM models on four language modeling tasks. Shuffle-$2$ denotes the shuffle of two Dyck-$1$ languages defined over different alphabets, and similarly Shuffle-$6$ denotes the shuffle of six Dyck-$1$ languages defined over different alphabets. Min/Max/Median results were obtained from $10$ different runs of each model with the same random seed across each run. We note that the LSTM models not only outperformed the RNN and GRU models but also often achieved full accuracy on the short test set in all the ``counting'' tasks. Nevertheless, even the LSTMs were not able to yield a good performance on the Dyck-$2$ language modeling task, which requires a stack-like mechanism.}
\label{tab:results}
\end{table*}

Setting $p = \frac{1}{2}$ and $q = \frac{1}{4}$, we generated $10,000$ distinct Dyck sequences, whose lengths were bounded to $[2, 50]$, for the training set. We used two test sets: The ``short'' test set contained $5,000$ distinct Dyck words defined in the same length interval as the training set but distinct from it. The ``long'' test set contained $5,000$ distinct Dyck words defined in the interval $[52, 100]$. Hence, there was no overlap between any of the training and test sets.

\vspace{-0.6em} 
\paragraph{Results:} 
Table~\ref{tab:results} lists the performances of the RNN, GRU, and LSTM models on the Dyck-$1$ language. First, we highlight that all the LSTM models obtained full accuracy on the training set and short test set, whose sequences were bounded to $[2,50]$, in all the ten trials. They were also able to easily generalize to longer and deeper sequences in the long test set: They obtained perfect accuracy in nine out of ten trials and $99.98\%$ accuracy (only $1$ incorrect prediction) in the remaining trial. These results exhibit that the LSTMs can indeed perform unbounded dynamic counting. 

The GRUs yielded an almost similar qualitative performance on the training and first test sets; however, they could not generalize well to longer and deeper sequences. On the other hand, the RNN models performed significantly worse than the first two recurrent models, in terms of their median accuracy rate. We note that similar empirical observations about the performance-level differences between the RNNs, GRUs, and LSTMs for other simple formal languages were also reported by \citet{weiss2018practical} and  \citet{bernardy2018can}.

\vspace{-0.5em} 
\subsection{The Shuffle-$\boldsymbol{k}$ Language}
\vspace{-0.3em} 
Next, we consider two \textit{shuffle} languages, which are both generated by the Dyck-$1$ language. Before describing each task in detail, let us first define the notion of \textit{shuffling} formally. The shuffling operation $||: \Sigma^{*} \times \Sigma^{*} \to \mathcal{P} (\Sigma^{*})$ can be inductively defined as follows:%
\footnote{We abuse notation by allowing a string to stand for its own singleton set.}
\vspace{-0.2em}
\begin{itemize}
\setlength\itemsep{-0.2em}
    \item $u || \varepsilon = \varepsilon || u = \{u\}$
    \item $\alpha u || \beta v = \alpha (u || \beta v) \cup \beta (\alpha u || v)$
\end{itemize}
\noindent for any $\alpha, \beta \in \Sigma$ and $u, v \in \Sigma^{*}$. For instance, the shuffling of $ab$ and $cd$ would be:
\vspace{-.2em}
\begin{align*}
    ab || cd = \{abcd, acbd, acdb, cabd, cadb, cdab\}
\end{align*}
There is a natural extension of the shuffling operation $||$ to languages.  The \textit{shuffle} of two languages $\mathcal{L}_1$ and $\mathcal{L}_2$, denoted $\mathcal{L}_1 || \mathcal{L}_2$, is defined as the set of all the possible interleavings of the elements of $\mathcal{L}_1$ and $\mathcal{L}_2$, respectively, that is:
\begin{align*}
    \mathcal{L}_1 || \mathcal{L}_2 = \bigcup_{\substack{u \in \mathcal{L}_1, \ v \in \mathcal{L}_2}} u || v
\end{align*}
Given a language $\mathcal{L}$, we define its self-shuffling $\mathcal{L}||^2$ to be $\mathcal{L} || \sigma(\mathcal{L})$, where $\sigma$ is an isomorphism on the vocabulary of $\mathcal{L}$ to a disjoint vocabulary. More generally, we define the $k$-self-shuffling \[\mathcal{L}||^k = \left\{
\begin{array}{ll}
   \{\varepsilon\}  & \mbox{if $k=0$}  \\
  \mathcal{L} || \sigma(\mathcal{L}||^{k-1})  & \mbox{otherwise}\eqpunc{.}
\end{array}\right.\]

\paragraph{The Shuffle-2 Language:} 
The first language is $D_1||^2$, the shuffle of $D_1$ and $D_1$, where the first $D_1$ is over the alphabet $\{(,)\}$ and the second over the alphabet $\{[,]\}$.
For instance, the sequence $ ( \ [ \ ) \ ]$ is in $D_1||^2$ but not in $D_2$, whereas $(\ ] \ [ \ )$ is in neither $D_1||^2$ nor $D_2$. Note that $D_2$ is a subset of $D_1||^2$, but not the other way around.\footnote{On the other hand, we highlight once again that the target sequences in $D_1||^2$ are often different from those in $D_2$.}

To generate the training and test corpora, we used a probabilistic context-free grammar for the Dyck-$2$ language, which we will describe shortly, but considered correct target values for the sequences interpreted as per the Shuffle-$2$ language. The training set contained $10,000$ distinct sequences of lengths in $[2, 50]$. As before, the short test set had $5,000$ distinct samples defined over the same length interval but disjoint from the training set, and the long test set had $5,000$ distinct samples, whose lengths were bounded to $[52, 100]$.

\paragraph{The Shuffle-6 Language:} 
The second shuffle language is $D_1 ||^6$, the shuffle of six Dyck-$1$ languages, each defined over different parenthesis-pairs. Concretely, we used the following pairs: $( \ ), [ \ ], \{ \ \}, \langle \ \rangle, \lceil \ \rceil, \lfloor \ \rfloor$. In theory, an automaton with six separate unbounded-turn counters (DCA$_6$) can recognize this language. Hence, we wanted to explore whether our recurrent networks can learn to emulate a dynamic-counting $6$-counter machine. 

\begin{figure}[!h]
\centering
{\includegraphics[width=0.49\textwidth]{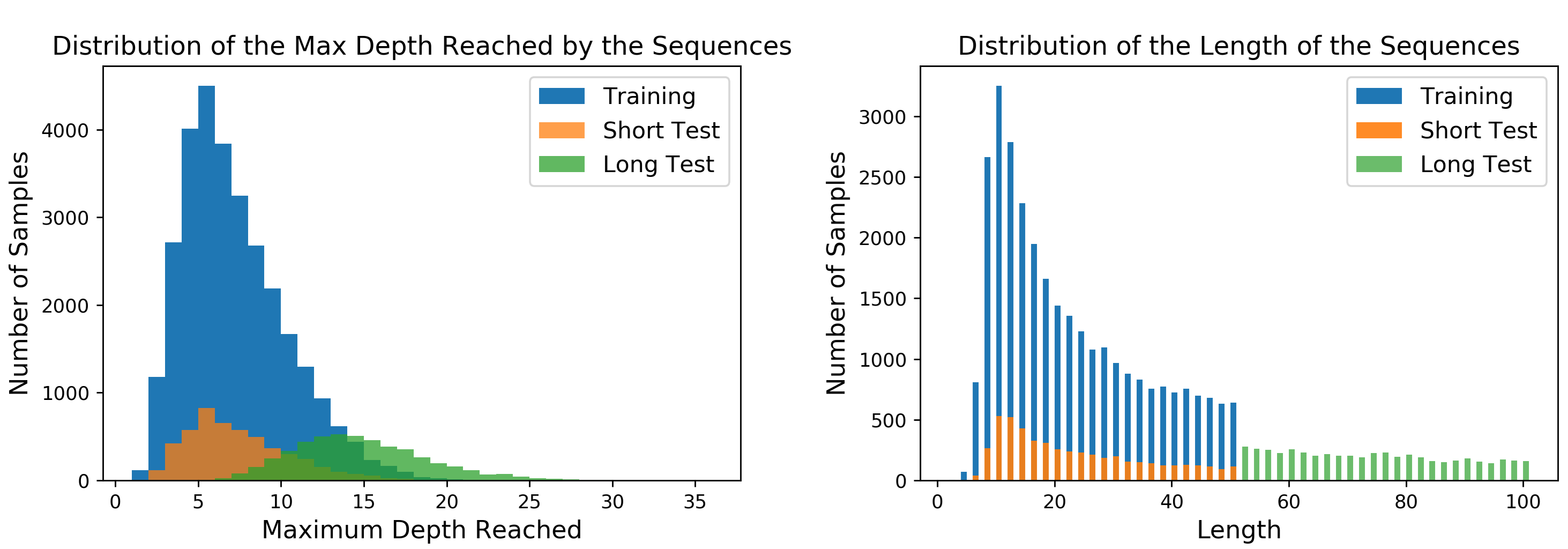}}
\caption{Length and maximum depth distributions of training/test sets for an example Shuffle-6 experiment.}
\label{fig:shuffle6}
\end{figure}

The training and two test corpora were generated in the same style as the previous sequence prediction task; however, we included $30,000$ samples in the training set for this language, due to the complexity of the language. Figure~\ref{fig:shuffle6} shows the length and maximum depth distributions of the training and test sets for one of the Shuffle-$6$ experiments.

\paragraph{Results:} 
As shown shown in Table~\ref{tab:results}, the LSTM models achieved a median accuracy of $100\%$ on the training and short test sets in both of the shuffle language variants. Furthermore, they were able to generalize well to longer and deeper sequences in both shuffle languages, achieving an almost perfect median accuracy score on the long test set. In contrast, the  GRU models performed slightly worse than  the LSTM models on the training and short test sets, but the GRUs did not yield the same performance as the LSTMs on the long test set, obtaining median scores of $93.12\%$ and $85.14\%$ in the Shuffle-$2$ and Shuffle-$6$ languages, respectively. Additionally, the simple RNN models always performed much worse than the GRU and LSTM models and could not even learn the training sets in either of the shuffle languages. These empirical findings show that the LSTM models can successfully emulate a DCA$_k$, a deterministic (real-time) automaton with $k$-counters, each capable of performing an arbitrary number of turns.

\subsection{The Dyck-2 Language}
The generalized Dyck language, $D_n$, represents the core of the notion of context-freeness by virtue of the Characterization Theorem of \citet{chomsky1963algebraic}, which provides a natural way to characterize the CFL class: 
\begin{theorem}
Any language in CFL can be represented as a homomorphic image of the intersection of a Dyck language $D_n$ and a regular language $R$.
\end{theorem}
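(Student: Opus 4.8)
The plan is to prove this (the Chomsky--Sch\"utzenberger representation theorem) by encoding derivation trees of a context-free grammar as well-nested bracket strings. Fix $L$ in $\mathrm{CFL}$ with $L = L(G)$ for a grammar $G = (N,\Sigma,P,S)$. We may assume, by standard preprocessing, that $\varepsilon \notin L$ and that $G$ has no $\varepsilon$-productions (the case $\varepsilon \in L$ is recovered by adjoining $\varepsilon$ to the regular language at the very end), and we add a fresh start symbol $S_0$ with the single production $p_0 : S_0 \to S$, occurring on no right-hand side. For each production $p : A \to Y_1 \cdots Y_k$ of $G$ (now including $p_0$) and each position $i \in \{1,\dots,k\}$, introduce a matched bracket pair $[_{p,i}$ and $]_{p,i}$; let $n$ be the number of such pairs and $D_n$ the Dyck language over them. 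Define the \emph{bracketed grammar} $G'$ by replacing each $p : A \to Y_1 \cdots Y_k$ with $A \to [_{p,1} Z_1 ]_{p,1} [_{p,2} Z_2 ]_{p,2} \cdots [_{p,k} Z_k ]_{p,k}$, where $Z_i = Y_i$ if $Y_i \in N$ and $Z_i = \varepsilon$ if $Y_i \in \Sigma$. Every string of $L(G')$ is well-nested, so $L(G') \subseteq D_n$. Let $h$ be the homomorphism on the bracket alphabet defined by $h(]_{p,i}) = \varepsilon$ for all $p,i$; $h([_{p,i}) = Y_i$ when the $i$-th right-hand symbol of $p$ is a terminal; and $h([_{p,i}) = \varepsilon$ otherwise. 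A routine induction on derivations shows $h(L(G')) = L(G) = L$, since applying $h$ to a bracketed derivation reads off exactly its terminal yield, in order.

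Next I build a finite automaton $M$ whose language $R$ consists of the strings that locally ``look like'' a left-to-right depth-first traversal of a $G'$-derivation tree. The automaton needs to remember only the previous symbol it read, and it checks: the string begins with $[_{p_0,1}$; immediately after $[_{p,i}$ comes $]_{p,i}$ if the $i$-th symbol of $p$ is a terminal, and otherwise $[_{r,1}$ for some production $r$ whose left-hand side equals that ($i$-th, nonterminal) symbol of $p$; immediately after $]_{p,i}$ comes $[_{p,i+1}$ when $i$ is not the last position of $p$, and otherwise a closing bracket $]_{r,j}$ for some $r$ whose $j$-th right-hand symbol equals the left-hand side of $p$; and the run is accepting precisely when the last symbol read is $]_{p_0,1}$. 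Each of these is a constraint on a pair of consecutive symbols, so $R = L(M)$ is regular.

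The heart of the argument is the identity $D_n \cap R = L(G')$. The inclusion $L(G') \subseteq D_n \cap R$ is immediate, since any bracketed derivation is well-nested and passes every local check. For the converse, take $w \in D_n \cap R$ and prove by induction on $|w|$ that $w \in L(G')$; the clean inductive statement is that whenever $[_{p,i}\, u\, ]_{p,i}$ is a matched pair appearing in such a $w$ and the $i$-th right-hand symbol of $p$ is a nonterminal $A$, then $A \Rightarrow^{*}_{G'} u$. Membership in $D_n$ fixes how the brackets pair up; membership in $R$ then forces, level by level down the tree, that the interior of each matched pair is the correct expansion of the corresponding right-hand symbol. In particular the first bracket of $w$ must be $[_{p_0,1}$ and (because $S_0$ occurs on no right-hand side) its match must be the final symbol, so $w = [_{p_0,1}\, u\, ]_{p_0,1}$ with $S \Rightarrow^{*}_{G'} u$, whence $w \in L(G')$. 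Granting the lemma, $L = L(G) = h(L(G')) = h(D_n \cap R)$ (after adjoining $\varepsilon$ to $R$ if $\varepsilon \in L$), exhibiting $L$ as a homomorphic image of $D_n \cap R$, as required.

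The main obstacle is arranging that $D_n \cap R$ is \emph{exactly} $L(G')$ rather than a strict superset. The natural economy move---one bracket pair per production---fails, because a finite-memory automaton cannot verify that a node has exactly as many children as its production has right-hand symbols, nor recover a subtree's position inside its parent; the resulting $D_n \cap R$ would then contain malformed bracketings whose $h$-image falls outside $L$. Indexing brackets by \emph{(production, position)} removes this difficulty: after reading the closing bracket of the $i$-th child the automaton knows both the production $p$ and the index $i$, hence knows exactly which symbols may legally follow. With that in hand, the only remaining labor is the (routine, if slightly delicate) induction establishing $D_n \cap R \subseteq L(G')$.
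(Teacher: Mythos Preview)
The paper does not actually prove this theorem; it merely states it as the classical Chomsky--Sch\"utzenberger representation theorem and cites \citet{chomsky1963algebraic}, then moves on to prove only the subsequent proposition that $D_n$ reduces to $D_2$. There is therefore no ``paper's own proof'' to compare against.

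Your argument is the standard textbook proof and is essentially correct. The idea of indexing brackets by (production, position), defining the bracketed grammar $G'$, the erasing homomorphism $h$, and a one-symbol-lookback automaton for the local consistency checks is exactly the canonical construction. A couple of small points worth tightening: your inductive claim is stated for matched pairs inside a \emph{fixed} $w$ but the induction variable is $|w|$, which is slightly mismatched; it is cleaner to induct on $|u|$ (or on nesting depth) for substrings $[_{p,i}\,u\,]_{p,i}$ of $w$. Also, when you argue that $u$ decomposes as $[_{q,1} v_1 ]_{q,1}\cdots[_{q,k} v_k ]_{q,k}$, you are implicitly using that membership in $D_n$ forces each $[_{q,j}$ to match with $]_{q,j}$ (same index), not merely with some closing bracket---this is where the typed Dyck condition does real work and deserves an explicit sentence. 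With those clarifications the proof goes through.
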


Furthermore, $D_n$ can be reduced to $D_2$ at the expense of increasing the depth and length of the original sequences in the former language.

\begin{proposition}
$D_n$ is reducible to $D_2$.
\end{proposition}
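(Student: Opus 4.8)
\subsection*{Proof proposal}

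The plan is to realize the reduction as a homomorphism. Write $D_n$ over the alphabet $\Sigma_n=\{a_1,\bar a_1,\dots,a_n,\bar a_n\}$ and $D_2$ over $\Sigma_2=\{a,\bar a,b,\bar b\}$, and define $h\colon\Sigma_n^{*}\to\Sigma_2^{*}$ by $h(a_i)=a^{i}b$ and $h(\bar a_i)=\bar b\,\bar a^{i}$ for $1\le i\le n$. The claim to be established is that $h$ is a reduction, i.e. $w\in D_n \iff h(w)\in D_2$ for every $w\in\Sigma_n^{*}$ (equivalently $D_n=h^{-1}(D_2)$). Since $|h(w)|\le(n+1)|w|$ and each block $h(a_i)$ introduces at most $i\le n$ extra levels of nesting, this also yields the advertised blow-up in length and depth by a factor $O(n)$.

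For the forward direction I would induct on the standard grammar $S\to\varepsilon\mid a_iS\bar a_i\mid SS$ for $D_n$, using three elementary closure facts about $D_2$: it is closed under concatenation; if $X\in D_2$ then $bX\bar b\in D_2$; and if $X\in D_2$ then $a^{i}X\bar a^{i}\in D_2$. The case $SS$ uses $h(S_1S_2)=h(S_1)h(S_2)$ together with concatenation closure, and the case $a_iS\bar a_i$ just rewrites $h(a_iS\bar a_i)=a^{i}\bigl(b\,h(S)\,\bar b\bigr)\bar a^{i}$ and applies the last two facts in turn. This direction is routine.

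The backward direction is the real work. Given $h(w)\in D_2$ with $w=y_1\cdots y_m$, I would first project away $a,\bar a$: the erasing homomorphism maps $D_2$ into $D_1$ over $\{b,\bar b\}$, and since each opening symbol of $w$ contributes exactly one $b$ and each closing symbol exactly one $\bar b$, in order, the index-erased skeleton of $w$ must be a $D_1$ word; in particular $w$ carries a non-crossing perfect bracket-matching $\mu$. It then remains to show that every $\mu$-matched pair $(y_p=a_i,\,y_q=\bar a_j)$ satisfies $i=j$, after which $w\in D_n$ follows by peeling innermost pairs. The key step is a pushdown invariant: along any valid prefix of $h(w)$ ending at a block boundary, the stack content is a concatenation of blocks of the form $a^{*}b$, one per currently-open opening symbol of $w$. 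An innermost $\mu$-pair is an adjacent factor $a_i\bar a_j$ of $w$, whose image is $a^{i}b\bar b\,\bar a^{j}$; the literal factor $b\bar b$ is forced to be a matched pair, so after reading it the stack exposes exactly the $i$ symbols $a$ just pushed, sitting on top of an $a^{*}b$-concatenation. Reading $\bar a^{j}$ then forces $j\le i$ (for $j>i$ the $(j{-}i)$-th pop meets a $b$ or underflows); applying the same bound to the image of $w$ under the reversal anti-automorphism of $\Sigma_2^{*}$ that swaps $a\leftrightarrow\bar a$ and $b\leftrightarrow\bar b$ (which preserves $D_2$, and under which $h(a_i)\mapsto h(\bar a_i)$, $h(\bar a_i)\mapsto h(a_i)$, so the image is again of the form $h(\cdot)$) forces $i\le j$; hence $i=j$. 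One then deletes the now-balanced factor $a^{i}\bar a^{i}$ together with the corresponding adjacent pair of $w$ and recurses on the shorter string. The main obstacle is precisely this step: making the stack invariant fully rigorous --- in particular, ruling out ``trapped'' leftover $a$'s and confirming that the skeleton matching $\mu$ is forced before one may speak of innermost pairs. Everything else (defining $h$, the forward induction, and the length/depth estimate) is routine.
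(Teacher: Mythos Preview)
Your homomorphism does not give a reduction: the backward direction is false, not merely hard to prove. Take $w=a_2\bar a_1a_1\bar a_2$. Then $w\notin D_n$ (the only non-crossing matching of the open/close skeleton pairs position~$1$ with~$2$ and position~$3$ with~$4$, and the indices disagree in both pairs), yet
\[
h(w)=a^{2}b\cdot\bar b\,\bar a\cdot a\,b\cdot\bar b\,\bar a^{2}
= a\,a\,b\,\bar b\,\bar a\,a\,b\,\bar b\,\bar a\,\bar a\in D_2,
\]
as a direct stack trace confirms: after cancelling the first $b\bar b$ and one $\bar a$, a single $a$ from the $a_2$-block remains on the stack; the $a_1$-block then adds one more $a$ and a $b$; the second $b\bar b$ cancels, and now the two $\bar a$'s from $\bar a_2$ pop \emph{both} remaining $a$'s---one of which belonged to the wrong block. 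This is exactly the ``trapped leftover $a$'s'' scenario you flagged as the main obstacle; it is not an obstacle to the argument but a counterexample to the encoding itself, and your stack invariant (``the stack is a concatenation of $a^{*}b$-blocks at every block boundary'') is already violated after reading $h(a_2\bar a_1)$. The reversal-symmetry trick cannot rescue this: the word $w$ above is its own image under that anti-automorphism.

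The fix is easy: place the delimiter on the outside rather than the inside, i.e.\ set $h(a_i)=a\,b^{i}$ and $h(\bar a_i)=\bar b^{\,i}\bar a$. Then the outer $a$ seals off each block, and any index mismatch at an innermost pair forces an immediate $b/\bar a$ or $a/\bar b$ collision, after which your inductive deletion argument goes through cleanly with an $O(n)$ blow-up in length and depth. For comparison, the paper takes a different and tighter route: it encodes the index $i-1$ in $m=\lceil\log_2 n\rceil$ bits, mapping $p_i$ to the length-$m$ string over $\{(,[\}$ given by that binary expansion and $\overline{p_i}$ to the bit-reversed string over $\{),]\}$. This is still a letter-to-word homomorphism, but it achieves a length and depth blow-up of only $\lceil\log_2 n\rceil$ rather than $O(n)$.
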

\begin{proof}\footnote{A similar reduction is also provided by \citet{magniez2014recognizing}.}
Let $D_n$ be the Dyck language with $n$ distinct pairs of parentheses. Let us further suppose that $p = \{p_1, p_2, p_3, \ldots, p_n\}$ are the opening parentheses and that $\bar{p}= \{\overline{p_1}, \overline{p_2}, \overline{p_3}, \ldots, \overline{p_n}\}$ are their corresponding closing parentheses. We set $m = \ceil{\log_2 n}$ and encode each opening and closing parenthesis in $D_n$ with $m$ bits using either $($ and $[$ or $)$ and $]$. Furthermore, we map empty string to empty string.

Given an open parenthesis $p_i$, we first determine the $m$-digit binary representation of the number $i-1$ and then use $($ to encode $0$'s and $[$ to encode $1$'s in this representation. Given a closing parenthesis $\overline{p_i}$, we determine the $m$-digit binary representation of the number $i-1$, write the binary number \emph{in the reverse order}, and then use $)$ to encode $0$'s and $]$ to encode $1$'s. That is, 
\begin{align*}
    \Gamma: p \cup \bar{p} \cup \{\varepsilon\} &\to \mathbb{S}^m \cup \{\varepsilon\} \\
    p_i &\mapsto \mathbf{Enc}_{([} \circ \text{Bin} (i-1) \\
    \overline{p_i} &\mapsto \mathbf{Enc}_{)]} \circ \text{Rev} \circ \text{Bin} (i-1)\\
    \varepsilon &\mapsto \varepsilon
\end{align*}
where $\mathbb{S} = \{(,[, ),]\}$, the parentheses in $D_2$.
We note that $s = s_1 s_2 s_3 \cdots s_k$ is in $D_n$ if and only if $\Gamma (s) = \Gamma(s_1) \Gamma(s_2) \Gamma(s_3) \cdots \Gamma(s_k)$ is in $D_2$, completing the reduction.
\end{proof}

The previous proposition simply shows that we can map an expression in $D_n$ to an expression in $D_2$ at the expense of creating a deeper structure in the latter language by a factor of $m = \ceil{\log_2 n}$. For instance, if an expression $s$ in $D_n$ has a maximum depth of $k$, then the expression generated by the mapping above would have a maximum depth of $k \times m$ in $D_2$. 

Motivated by context-free-language universality \cite{sennhauser2018evaluating}, we therefore experimented with the Dyck-$2$ language defined over two types of parenthesis-pairs, namely $\{(, )\}$ and $\{[,]\}$, as well. The recognition of the Dyck-$2$ language requires a model to possess a stack-like component; real-time primitive counting does not enable us to capture the Dyck-$2$ language. Hence, if an RNN-based architecture learns to recognize this language, we can conclude that RNNs with finite precision can actually learn complex deeply nested representations. 

A probabilistic context-free grammar for the Dyck-$2$ language can be written as follows:
\begin{align*}
S \rightarrow \begin{cases} 
( S ) & \text{with probability } \frac{p}{2} \\
[ S ] & \text{with probability } \frac{p}{2} \\
SS & \text{with probability } q \\ 
\varepsilon & \text{with probability } 1 - (p+q) 
\end{cases}
\end{align*}
\noindent where $0 < p, q < 1$ and $(p+q) < 1$.

Setting $p = \frac{1}{2}$ and $q = \frac{1}{4}$, we generated $10,000$ distinct sequences, whose lengths were bounded to $[2, 50]$, for the training set. Again, we generated $5,000$ other distinct Dyck-$2$ sequences of lengths defined in the interval $[2, 50]$ for the first test set and $5,000$ distinct sequences of lengths defined in the interval $[52, 100]$ for the second test set. As in the previous case, there was no overlap between the training and test sets. 

\vspace{-0.4em} 
\paragraph{Results: }
As shown in Table~\ref{tab:results}, we found that none of our RNNs was able to emulate a DPA to recognize the Dyck-$2$ language, a context-free language that requires a model to contain a stack-like mechanism for recognition. Overall, the LSTM models had the best performances among all the networks, but they still failed to employ a stack-based strategy to learn the Dyck-2 language. Even the best LSTM model could achieve only $48.24\%$ and $1.46\%$ accuracy scores on the short and long test sets, respectively. 

\begin{figure*}[!ht]
\centering
{\includegraphics[width=0.99\textwidth]{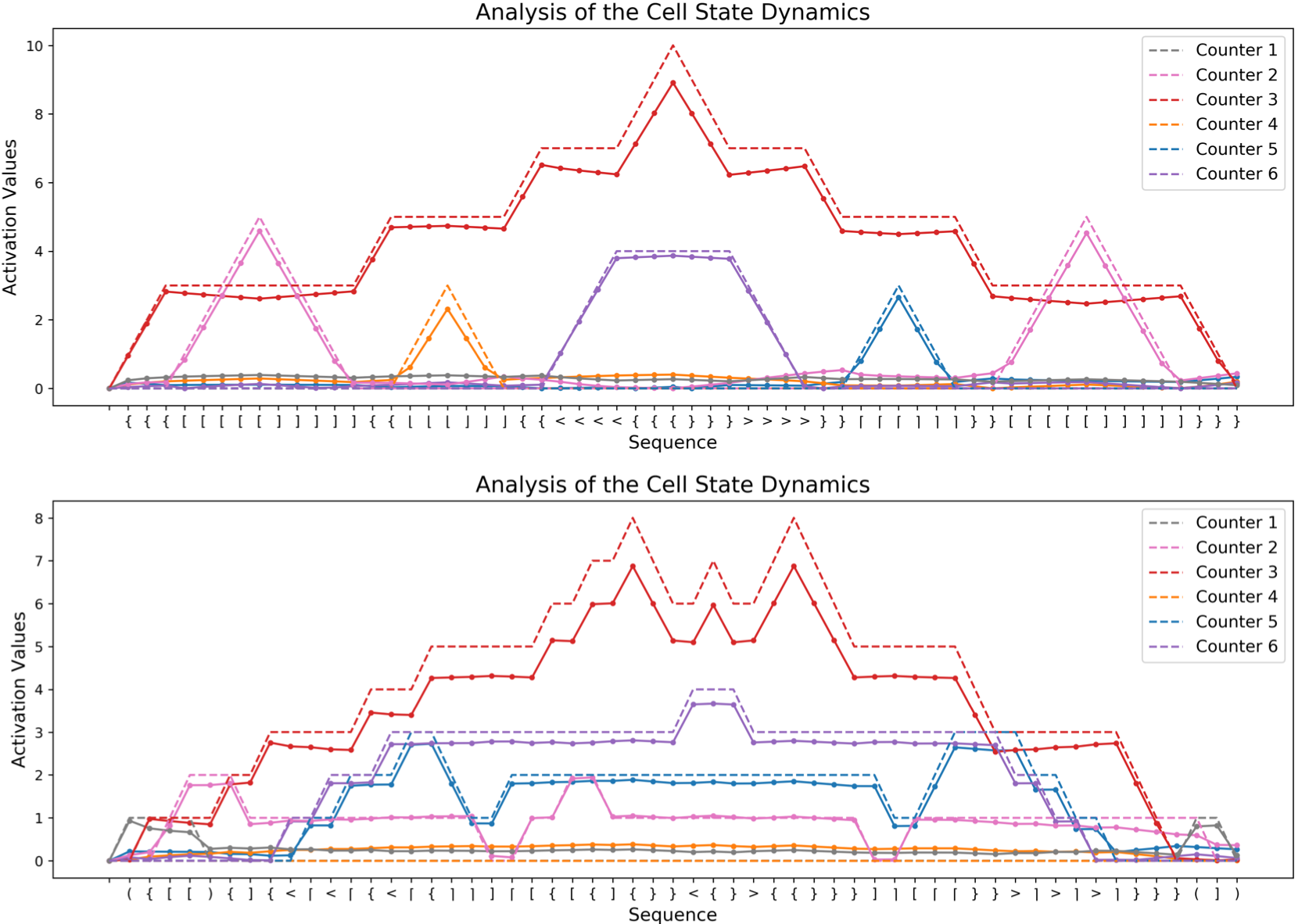}}
\caption{Visualization of the cell state dynamics of one of the LSTM models trained to learn $D_1||^6$, the Shuffle-$6$ language. The solid lines show the values of the cell states of the six out of eight units in the model, whereas the dashed lines depict the current depth of each distinct parenthesis-pair in $D_1||^6$. We highlight the striking parallelism between the solid lines and the dashed-lines. Our visualizations confirm that the LSTM models employ a simple counting mechanism to recognize the Shuffle languages.}
\label{fig:shuffle_graphs}
\end{figure*}

\section{Discussion and Analysis}
\label{sec:discussion}
\subsection{Visualization of Hidden+Cell States} 
Our empirical results on the Dyck-$1$ and Shuffle languages suggest that our LSTM models were performing dynamic counting to recognize these languages. In order to validate our hypothesis, we visualized the hidden and cell states of some of our LSTM models that achieved full accuracy on the test sets. 

Figure~\ref{fig:shuffle_graphs} illustrates that our LSTM is able to recognize the samples in $D_1||^6$ by emulating a DCA$_6$. In fact, the discrete even transitions in the cell state dynamics of the model reveal that six out of eight hidden units in the model are acting like separate counters. In some cases, we further discovered that certain units learned to count the length of the input sequences. Such length counting behaviours are also observed in machine translation \cite{shi2016neural,bau2018identifying,dalvi:2019:AAAI} when the LSTMs are trained on a fixed-length training corpus.\footnote{The visualizations for the Dyck-$1$ and Shuffle-$2$ languages were qualitatively similar.}

\begin{figure*}[!h]
\centering
{\includegraphics[width=0.99\textwidth]{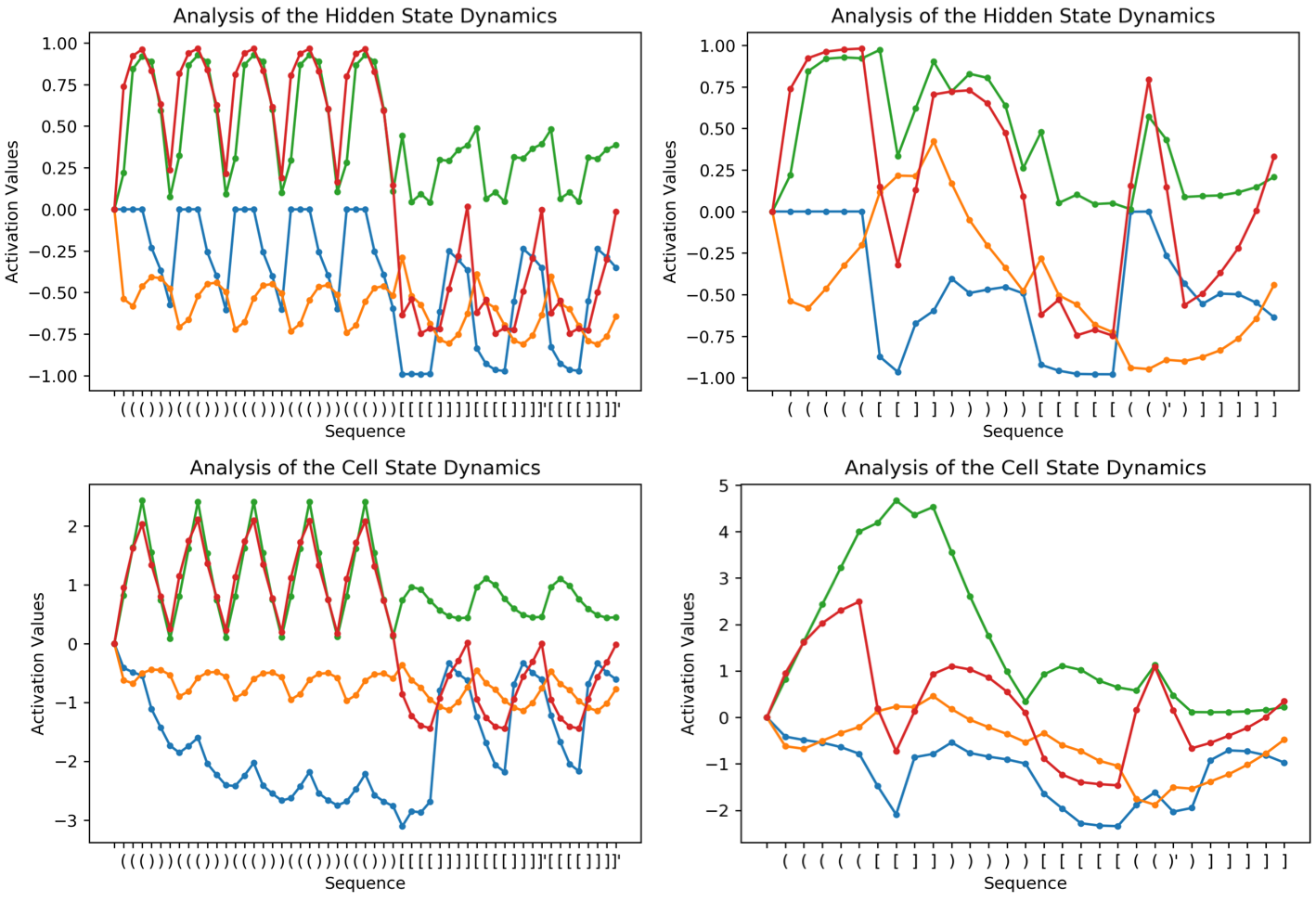}}
\caption{Visualization of the hidden and cell state dynamics of one of the LSTMs trained to learn the Dyck-$2$ language. The time steps at which the model made incorrect predictions are marked with an apostrophe in the horizontal axis. The plots on the left provide a demonstration of the periodic behaviour of the hidden and cell states of the model for a long sequence. Similarly, the plots on the right provide the complex counting behaviour of the model as it observes a nested sequence. We witnessed similar behaviours in our other models as well.}
\label{fig:dyck_graphs}
\end{figure*}

On the other hand, Figure~\ref{fig:dyck_graphs} provides visualizations of the hidden and cell state dynamics of one of our single-layer LSTM models with four hidden units when the model was presented two sequences in the Dyck-$2$ language. Both sequences have some noticeable patterns and were chosen to explore whether the model behaves differently in repeated (or similar) subsequences. It seems that the LSTM model is trying to employ a complex counting strategy to learn the Dyck-$2$ language but failing to accomplish this task.

\subsection{LSTM with a Single Hidden Unit}
In theory, a DCA$_1$ should be able to easily recognize Dyck-$1$, the well-balanced parenthesis language. Can an LSTM with one hidden unit learn Dyck-$1$?
Our empirical results (Figure~\ref{fig:one_unit}) confirmed that LSTMs can indeed learn this language by effectively using the single hidden unit to count up the total number of left and right parentheses in the sequence. Similarly, we found that an LSTM with only two hidden units can recognize $D_1||^2$.

\begin{figure}[!h]
\centering
{\includegraphics[width=0.48\textwidth]{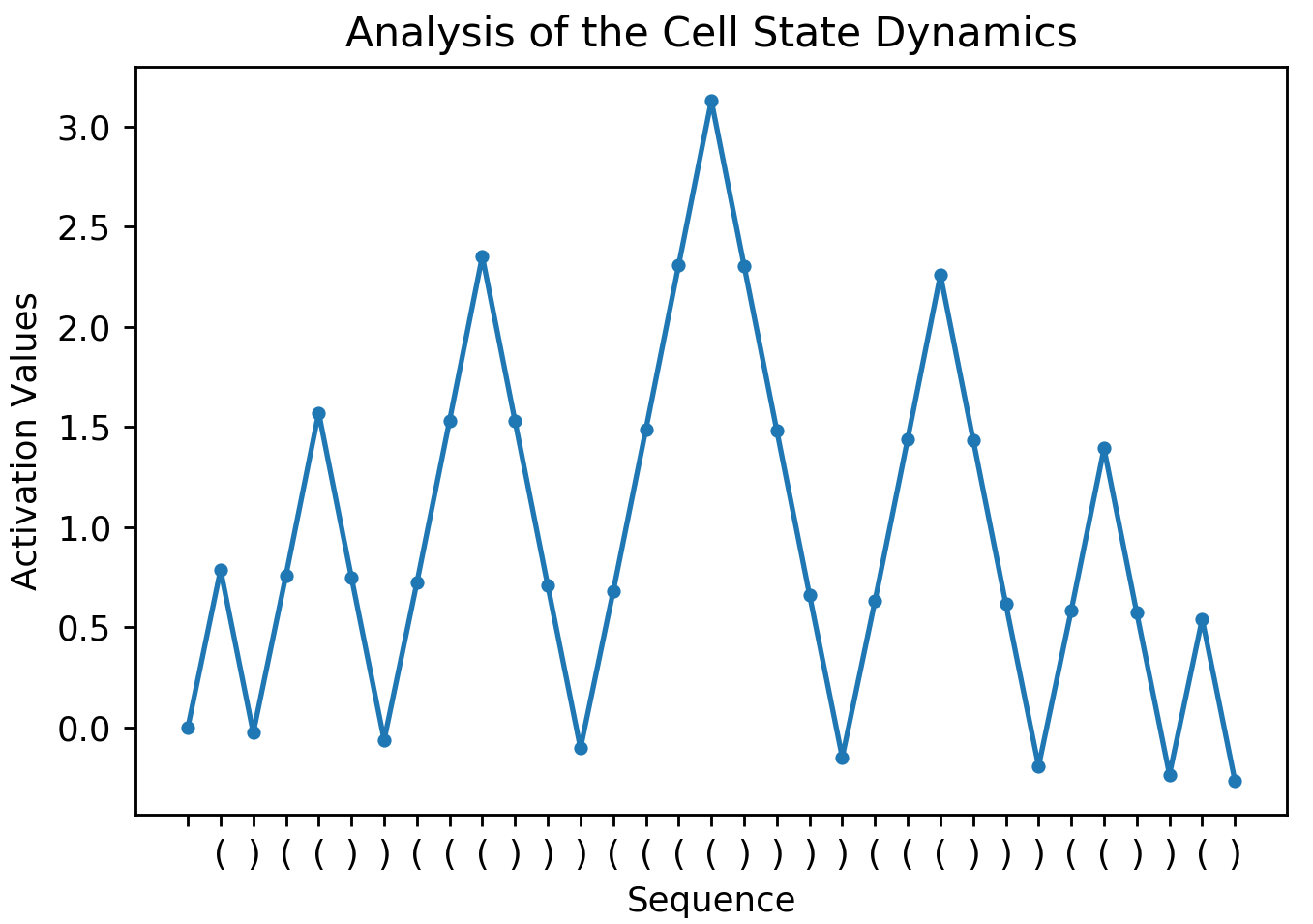}}
\caption{A single-layer LSTM with one hidden unit learns the Dyck-$1$ language by counting up upon the observance of $($ and down upon the observance of $)$.}
\label{fig:one_unit}
\end{figure}

\begin{figure}[t]
\centering
{\includegraphics[width=0.48\textwidth]{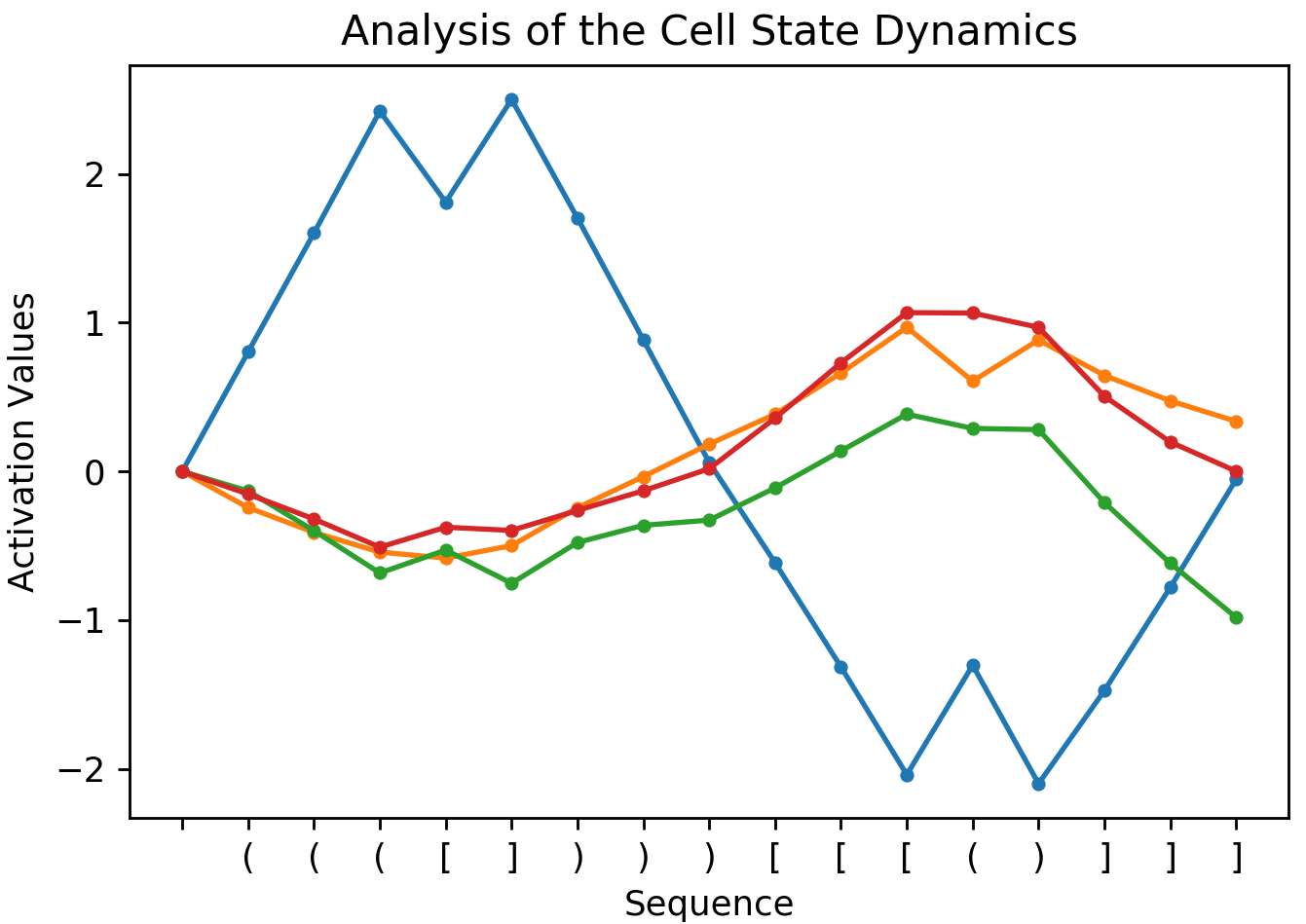}}
\caption{The cell state dynamics of one of the LSTM models trained to predict the last closing parenthesis. Our LSTMs often achieved full accuracy on this task.}
\label{fig:lastclosing}
\end{figure}

\subsection{Predicting the Last Closing Parenthesis}
Following \citet{skachkova2018closing}, we also trained an LSTM model with four hidden units to learn to predict the last closing parenthesis in the Dyck-$2$ language. The model learned the task in a couple of epochs and achieved perfect accuracy on the training and test sets. However, our simple analysis of the cell state dynamics of the LSTM in Figure~\ref{fig:lastclosing} suggests that the model is doing some complex form of counting to perform the desired task, rather than learning the Dyck-$2$ language.

\section{Conclusion}
We investigated the ability of standard recurrent networks to perform dynamic counting and to encode hierarchical representations, by considering three simple counting languages and the Dyck-$2$ language. Our empirical results highlight the overall high-caliber performance of the LSTM models over the simple RNNs and GRUs, and further inflect our understanding of the limitations and strengths of these models.

\section{Acknowledgement}
The first author gratefully acknowledges the support of the Harvard College Research Program (HCRP). The third author was supported by the Harvard Mind, Brain, and Behavior Initiative. The computations in this paper were run on the Odyssey cluster supported by the FAS Division of Science, Research Computing Group at Harvard University.

\newpage

\bibliography{acl2019}
\bibliographystyle{acl_natbib}

\end{document}